\newtheorem{theorem}{Theorem}[section]
\newtheorem{proposition}[theorem]{Proposition}
\newtheorem{definition}[theorem]{Definition}
\newlength{\IntHtx}
\newcommand{\Intvalx}[2]{\mspace{-1mu}\left.\rule[0pt]{0pt}{\IntHtx}\right|_{\,#1}^{\,#2}}
\begin{document}

\title{Convexity Shape Prior for Level Set based Image Segmentation Method}%

\author{Shi~Yan, Xue-cheng~Tai, Jun~Liu, Hai-yang~Huang%
\thanks{Shi~Yan, Jun~Liu, Haiyang~Huang are with Laboratory of Mathematics and Complex Systems (Ministry of Education of China), School of Mathematical Sciences, Beijing Normal University, Beijing, 100875, People's Republic of China, Xue-cheng~Tai is with Department of Mathematics, Hong Kong Baptist University, Kowloon Tong, Hong Kong, email: \url{ysicesword@mail.bnu.edu.cn,xuechengtai@hkbu.edu.hk,jliu@bnu.edu.cn,hhywsg@bnu.edu.cn}}}%

\maketitle

\begin{abstract}
We propose a geometric convexity shape prior preservation method for variational level set based image segmentation methods. Our method is built upon the fact that the level set of a convex signed distanced function must be convex. This property enables us to transfer a complicated geometrical convexity prior into a simple inequality constraint on the function. An active set based Gauss-Seidel iteration is used to handle this constrained minimization problem to get an efficient algorithm. We apply our method to
region and edge based level set segmentation models including Chan-Vese (CV) model with guarantee that the segmented region will be convex.
Experimental results show the effectiveness and quality of the proposed model and algorithm.
\end{abstract}

{Keywords}:
Convexity shape prior, Image segmentation, Level set method, Chan-Vese model.

\maketitle

\section{Introduction}\label{sec:introduction}
Shape prior plays an important role in image segmentation, in which one of the most important is the convexity shape prior. A convexity shape or called a convexity region means that the points inside the region form a convex set. Things with convexity shape are quite common in our daily lives, such as a road sign, a book, or a basketball, \emph{etc.}. Here, we propose a general convexity shape prior preservation method based on level set method. This method can be extended to many level set based image segmentation methods to get some convexity segmentation results.

Shape prior is widely studied in image segmentation. Leventon \emph{et al.} \cite{leventon2000statistical} used statistical methods in shape prior methods. Rousson and Paragios \cite{rousson2002shape} first applied level set method in shape prior methods. Chan and Zhu \cite{chan2005level} purposed a theory which allows the shape prior to translate, scale and rotate based on level set method. Thiruvenkadam \emph{et al.} \cite{thiruvenkadam2007segmentation} could segment multi objects from one single image, even the objects are duplicated or covered. Vu and Manjunath \cite{vu2008shape} combined shape prior methods with graph-cut method \cite{boykov2001fast,kolmogorov2004energy}. Guo \emph{et al.} \cite{guo2015automatic} built a model which allows to choose a best shape prior from multi shape priors which are given by the user. The above works are all about some special shape priors, but in the nature, one thing may have several shapes, i.e., a general shape, such as convexity shape prior.

As for convexity shape prior, Liu \emph{et al.} \cite{liu1999role} studied the importance of convexity shape prior in image, as well as more studies in \cite{gorelick2014convexity,royer2016convexity,strekalovskiy2011generalized,bae2017augmented}. The key of handling convexity shape prior is to describe convexity. Gorelick \emph{et al.} \cite{gorelick2014convexity} introduced some constraints on all the straight lines in the discrete image plane under graph theory, which ensures such a property: for any three points that lie on a straight line, if the outer two points were labeled as the object, then the middle point must also be labeled as the object, which is equal to the definition of convexity shape. Royer \emph{et al.} \cite{royer2016convexity} based on graph theory, defined the path and the straight line path between object points, to ensure that all the points which are on the straight path between two object points must be labeled with object. This property could also lead to a convexity shape. These two methods are based on graph theory which studied the connections between vertexes. However, such a graph based discrete methods could not be easily applied to variational PDE method since the geometrical constraints cannot be easily addressed in variational methods.

On variational PDE method, existing methods on convexity shape prior include \cite{strekalovskiy2011generalized,bae2017augmented}. Strekalovskiy and Cremers \cite{strekalovskiy2011generalized} introduced an energy functional, which on each point defines the penalty of changing label in each direction. With the help of N auxiliary sub-region, this model could successfully segment a N-polygon. For example, if we want to get a circle as a segmentation result, N would be quite large, which requires many auxiliary sub-regions. Bae \emph{et al.} used a relation between convexity and curvature in \cite{bae2017augmented}. They transferred convexity into a constraint on curvature, and proposed an Euler's elastic based segmentation model. To optimize this problem, one needs to solve a nonlinear fourth order PDE.
It would be time-consuming, although some recently developed operator splitting methods can be employed. Besides, it is not easy to strictly keep the constraint numerically,
and thus some of the control parameters would play a very import role on segmentation results.

In this paper, we would like to propose a simple linear constraint for convexity shape prior. It is derived from an import property of a smooth signed distance function (SDF).

SDF is used in the reinitialization step of the level set function (LSF) evolution based on level set method (LSM) \cite{osher1988fronts}. \cite{kass1988snakes,Caselles1993,caselles1997geodesic,Jr1997A} proposed active contour based image segmentation models which took advantage of LSM. Chan and Vese \cite{chan2001active} combine LSM with region based image segmentation method. Rada and Chen \cite{Lavdie2012A} used two level sets to achieve selective image segmentation. Compared with \cite{Lavdie2012A}, Rada and Chen \cite{Rada2013Improved} achieved the same purpose using only one level set. Ali \emph{et al.} \cite{Ali2017Multiphase} can segment multi-objects by only one level set. Although LSM is widely used, the LSF may develop irregularities during the level set evolution. Usually, a reinitialization of LSF is needed \cite{chan2001active} during the curve evolutions. Exactly finding a SDF equals to solve the well-known Eikonal equation. For high computational efficiency,
fast matching method \cite{Sethian2000Level} which is similar to Dijkstra's algorithm \cite{Dijkstra1959A} in graph can always be used to reinitialize SDF. Fast sweeping method \cite{Zhao2005A} which uses Gauss-Seidel iteration with alternating sweeping ordering method is another alternative method for reinitialization task.
To avoid direct reinitialization, Li \emph{et al.} \cite{li2010distance} proposed a penalty method by adding a nonconvex functional in the segmentation energy for the constraint $|\nabla \phi|=1$.
In papers \cite{6944678,10.1007/978-3-642-40811-3_60,doi:10.1118/1.4947126}, they extended this penalty method and proposed a two layers level set approach, the segmentation results are very similar to convexity shape. However, they could not give a theoretical condition to ensure convexity.
In our numerical experiments, we will compare our new segmentation model with these edge and region based level set methods including the well-known Chan-Vese model.

In this paper,
we show that the convex shape prior can be guaranteed by a simple constraint on the signed distance function. This constraint can be easily incorporated into variational models with LSM and solved efficiently by some constrained minimization techniques.

The main contributions of this paper are very clear: firstly, we propose a simple linear inequality constraint for convex shapes prior based on
signed distance functions. Secondly, an efficient active set based Gauss-Seidel algorithm is proposed to numerically handle this inequality constraint. Numerical experiments on real and synthetic images are supplied to show the effectiveness and quality of the proposed model and algorithm.

The rest of our paper is organized as follows: firstly, in section \ref{sec:convex_relatedworks}, some related works on level set segmentation and convex shapes prior are introduced. Nextly, a sufficient condition for convex shape prior together with some mathematical analysis are given in section \ref{sec:mainproof}. In the following section \ref{sec:applyonDRLSE}, we apply our method on the level set based models. Section \ref{sec:solveconvex} contains an algorithm that can strictly keep the proposed inequality constraint. Experimental results and some comparisons are shown in section \ref{sec:exp}. Finally, we conclude this paper in section \ref{convex:conclusion}.

\section{Related works}\label{sec:convex_relatedworks}

In this section, firstly, we will introduce the popular Chan-Vese  model. Then secondly, some convex shape prior methods would be given.

\subsection{Chan-Vese Model}
Chan-Vese model is proposed by Chan and Vese in \cite{chan2001active}, which is an image segmentation model based on level set method and energy minimization method.

The first part of the energy functional is called data term
\begin{equation}\label{cv_dataterm}
\begin{array}{rcl}
F_{1}(c_1,c_2,\Gamma)
&=& \lambda_{1}\displaystyle\biggl.\int_{inside(\Gamma)}|I(x)-c_{1}|^2dx \\
&&+ \lambda_{2}\displaystyle\biggl.\int_{outside(\Gamma)}|I(x)-c_{2}|^2dx.
\end{array}
\end{equation}
Here $\Gamma$ is an arbitrary closed curve, $inside(\Gamma)$ denotes the region inside curve $\Gamma$, as well as $outside(\Gamma)$ denotes the region outside curve $\Gamma$, $c_{1}$ and $c_{2}$ are unknown constants, which denote the inner and outer means of the two region $inside(\Gamma)$ and $outside(\Gamma)$, respectively.

The second part of the energy functional is called regularization term, as
\begin{equation}\label{cv_regterm}
F_2(\Gamma) = \mu Length(\Gamma) + \nu Area(inside(\Gamma)),
\end{equation}
where $Length(\Gamma)$ denotes the Length of curve $\Gamma$, $Area(inside(\Gamma))$ denotes the area of region $inside(\Gamma)$.

Combine (\ref{cv_dataterm}) and (\ref{cv_regterm}) together we have,
\begin{equation}
\label{cvmodel}
\begin{array}{rcl}
F_3(c_{1},c_{2},\Gamma) &=& \mu Length(\Gamma) + \nu Area(inside(\Gamma)) \\
&&+ \lambda_{1}\displaystyle\biggl.\int_{inside(\Gamma)}|I(x)-c_{1}|^2dx\\
&& + \lambda_{2}\displaystyle\biggl.\int_{outside(\Gamma)}|I(x)-c_{2}|^2dx.
\end{array}
\end{equation}
While in $\mathbb{R}^{N}$, we have the following equation,
\begin{displaymath}
Area(inside(\Gamma))\leq c (Length(\Gamma))^{\frac{N}{N-1}} .
\end{displaymath}
Thus, we can set $\nu=0$.

By introducing the Heaviside function $H(z)$, Dirac function $\delta(z),z\in \mathbb{R}$ and level set function $\phi(x),x\in\Omega$, then ({\ref{cvmodel}}) can be reformulated as
\begin{equation}\label{convex:cvmodel}
\begin{array}{rcl}
&& F(c_{1},c_{2},\phi) = \mu\displaystyle\biggl.\int_{\Omega} \delta(\phi(x)) |\nabla \phi(x)|dx \\
&&+\lambda_1\displaystyle\biggl.\int_{\Omega}|I(x)-c_{1}|H(\phi(x))dx \\
&&+ \lambda_2 \displaystyle\biggl.\int_{\Omega}|I(x)-c_{1}|(1-H(\phi(x)))dx.
\end{array}
\end{equation}
Here
\begin{displaymath}
H(z) =
\left\{
\begin{array}{ll}
1,& \text{if } z\geq0, \\
0,& \text{if } z< 0,
\end{array}
\right.
\end{displaymath}
and
\begin{displaymath}
\delta(z)= H^{'}(z).
\end{displaymath}
In implementation, $H$ and $\delta$ can be replaced by some smoothed version, see \cite{chan2001active}.
%
%
It is easy to change the length term in the Chan-Vese model by a weighted length associated with an edge detector.
 Let us use the edge detector
 \begin{displaymath}
g(x) = \frac{1}{1+ |\nabla (G * I)(x) |^2}.
\end{displaymath}
Here G is a Gaussian kernel function, $*$ represents the convolution operator, and $\nabla$ represents the gradient operator.
The CV model with an edge detector is then
\begin{equation}\label{eq:DRLSE}
\begin{array}{rcl}
&& F(c_{1},c_{2},\phi) = \mu\displaystyle\biggl.\int_{\Omega} g(x) \delta(\phi(x)) |\nabla \phi(x)|dx \\
&&+\lambda_1\displaystyle\biggl.\int_{\Omega}|I(x)-c_{1}|H(\phi(x))dx \\
&&+ \lambda_2 \displaystyle\biggl.\int_{\Omega}|I(x)-c_{1}|(1-H(\phi(x)))dx.
\end{array}
\end{equation}

\subsection{Convexity shape prior method by Gorelick et al.}

Gorelick \emph{et al.} \cite{gorelick2014convexity} introduced a  convexity constraint based on graph theory. In their work, the convexity of objects can be represented as a sum of 3-clique potentials by penalizing any 1-0-1 configuration on all straight lines. Here the label 1 denotes the object, the label 0 denotes the background in an image, with details below.

Define a function $\Psi$ on all triplets of ordered pixels $(p,q,r)$ along all line $l\in L$, $L$ presents all the line on the image domain, as $\Psi: \{0,1\}\times\{0,1\}\times\{0,1\} \rightarrow \mathbb{R}$:
\begin{displaymath}
\Psi(u(p),u(q),u(r)) =
\left\{
\begin{array}{ll}
\infty & \text{if }(u(p),u(q),u(r)) = (1,0,1),\\
0&\text{otherwise},
\end{array}
\right.
\end{displaymath}
where $u(p),u(q),u(r)$ denote the label of point $p,q,r$, respectively, and $u(p)=1$ denotes that the label of point $p$ is object, $u(p) = 0$ denotes that the label of point $p$ is background.

Then, they define the convexity energy $E_{convexity}(u)$ as
\begin{displaymath}
E_{convexity}(u) = \sum_{l\in L} \sum_{(p,q,r)\in l} \Psi(u(p),u(q),u(r)).
\end{displaymath}

It is easy to see that the target region is convex if and only if $E_{convexity}(u) = 0$. By adding this energy to some proper chosen data term
$E_{data}(u) = \sum_{x\in\Omega} \left(-log P(I(x)|\theta^{u(x)})\right)$, where $\theta^{0},\theta^{1}$ are the parameters estimated from a bounding box given by the users. Here $P(\cdot)$ is the Gibbs prior distribution. Thus the total energy becomes
\begin{displaymath}
 E_{total}(u) = E_{convexity}(u) + E_{data}(u).
\end{displaymath}

Though this method can produce some approximate convex shape segmentations, it is not easy to extend it to general variational models for image segmentation.
In the following, we focus on transferring the geometrical relationship to functions relationships.


\section{Proposed Convex Shape Constraint}\label{sec:mainproof}
We will give some results with the help of implicit function presentation of a curve.
Our main result of convex shape prior is built upon the signed distanced function.
\begin{definition}[signed distance function, SDF \cite{chan2001active}]
Let  $\mathbb{D}$ be a subset of $\mathbb{R}^2$ with metric $d$ and $\partial \mathbb{D}$ be its boundary.
A function $\phi$:~~$\Omega\in \mathbb{R}^2 \rightarrow \mathbb{R}$ is called a signed distance function if:
\begin{displaymath}
\phi(x) =
\left\{
\begin{array}{ll}
-\underset{y\in \partial \mathbb{D}}{\inf} d(x,y)& \text{if } x\in \mathbb{D},\\
\underset{y\in \partial \mathbb{D}}{\inf} d(x,y)& \text{if } x\in \Omega\backslash \mathbb{D},
\end{array}
\right.
\end{displaymath}
where $\Omega$ is the image domain.
\end{definition}
In this paper, the metric $d$ is the commonly used Euclidean distance. To get a simple sufficient condition for convex region which is represented by SDF. In the following
analyses, we assume that the SDF $\phi\in C^2(\Omega), a.e.$.

In geometry, convex set can be defined as
\begin{definition}[Convex region] $\mathbb{D}$ is a convex region if and only if:
\begin{displaymath}
\forall x,y \in \mathbb{D}, \forall \lambda\in(0,1), \lambda x+(1-\lambda) y \in \mathbb{D}.
\end{displaymath}
\end{definition}

Using these two definitions and a mild smoothness assumption, we have some connections between signed distance function and convex region.
\begin{theorem}\label{convex:maintheorem}
Let $\phi$ be a signed distance function of a region $\mathbb{D}$.
 If $\phi \in C^2$ a.e. in $\Omega$ and
\begin{displaymath}
\Delta \phi (x) \geq 0, \forall x\in \Omega.
\end{displaymath}
Then we have that $\mathbb{D}$ must be a convex region.
\end{theorem}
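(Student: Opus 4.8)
The plan is to exploit the defining property of a signed distance function, namely that $|\nabla\phi|=1$ wherever $\phi$ is differentiable, and to translate the condition $\Delta\phi\ge 0$ into convexity of the sublevel set $\mathbb{D}=\{x:\phi(x)\le 0\}$. The geometric heart of the matter is this: for an SDF, the level sets are parallel curves, and the Laplacian $\Delta\phi$ at a point equals (up to sign conventions) the curvature of the level curve through that point. So $\Delta\phi\ge 0$ should say that the boundary $\partial\mathbb{D}=\{\phi=0\}$ curves the ``right way'' everywhere, i.e. it is nonnegatively curved toward the inside, which is exactly convexity of $\mathbb{D}$.

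First I would record the elementary identity for an SDF: differentiating $|\nabla\phi|^2\equiv 1$ gives $\nabla\phi\cdot\nabla(\partial_i\phi)=0$ for each $i$, hence the Hessian $D^2\phi$ annihilates $\nabla\phi$; that is, $\nabla\phi$ is an eigenvector of $D^2\phi$ with eigenvalue $0$. In $\mathbb{R}^2$ this means the Hessian has at most one nonzero eigenvalue, and its value is $\operatorname{tr}(D^2\phi)=\Delta\phi$; the corresponding eigendirection is tangent to the level curve. A short computation identifies this eigenvalue with the signed curvature $\kappa$ of the level curve $\{\phi=\phi(x)\}$, oriented by the normal $\nabla\phi$. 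Thus the hypothesis $\Delta\phi\ge 0$ is equivalent to $\kappa\ge 0$ along every level curve, in particular along $\partial\mathbb{D}$.

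Next I would turn nonnegative curvature of $\partial\mathbb{D}$ into the convexity statement of Definition 2. One clean route: take any two points $x,y\in\mathbb{D}$ and suppose, for contradiction, that the segment $[x,y]$ exits $\mathbb{D}$. Consider the restriction $f(t)=\phi((1-t)x+ty)$ on $[0,1]$; then $f(0)\le 0$, $f(1)\le 0$, but $f(t_0)>0$ for some interior $t_0$, so $f$ attains a positive interior maximum. At such a maximizer $f''\le 0$, i.e. $(y-x)^\top D^2\phi\,(y-x)\le 0$. But since $D^2\phi$ in $\mathbb{R}^2$ has eigenvalues $0$ and $\Delta\phi\ge 0$, it is positive semidefinite, so $(y-x)^\top D^2\phi\,(y-x)\ge 0$, forcing $f''=0$ at the maximizer and in fact $D^2\phi\,(y-x)=0$ there. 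One then propagates this degeneracy: $f'$ is constant near the maximizer, and since $f'=0$ there, $f$ is locally constant, and a connectedness/continuation argument along the segment shows $f$ cannot have risen above $0$ in the first place — contradiction. Alternatively, and perhaps more transparently, I would argue directly that $\partial\mathbb{D}$ with $\kappa\ge 0$ bounds a convex region by a standard fact from the differential geometry of plane curves (a simple closed curve with nonnegative signed curvature with respect to the inward normal is convex), then note that $\mathbb{D}$, being the SDF-sublevel set, is exactly the region enclosed.

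The main obstacle I anticipate is rigor around the smoothness: the hypothesis is only $\phi\in C^2$ \emph{a.e.}, and a generic SDF fails to be $C^2$ on the medial axis (cut locus) of $\partial\mathbb{D}$. So the Hessian identity and the second-derivative test along $[x,y]$ are only valid off a measure-zero set, and I must ensure the ``bad'' set does not obstruct the argument — e.g. by perturbing the segment $[x,y]$ slightly, by working with the a.e.-valid inequality $D^2\phi\succeq 0$ together with continuity of $\phi$ (so that $\phi$ is convex as a function, hence has convex sublevel sets), or by approximating $\phi$ by smooth functions with $\Delta\ge 0$. Handling this degeneracy cleanly — in particular justifying that $\phi$ is a convex function on $\Omega$ from a merely a.e. Hessian bound plus continuity — is the delicate point; once $\phi$ is known convex, $\mathbb{D}=\{\phi\le 0\}$ is immediately convex and the theorem follows.
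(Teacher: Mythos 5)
Your proposal takes essentially the same route as the paper's proof: differentiate the eikonal identity $|\nabla\phi|=1$ to show that $\nabla\phi$ is a null eigenvector of the Hessian (equivalently $\phi_{x_1x_1}\phi_{x_2x_2}-\phi_{x_1x_2}\phi_{x_2x_1}=0$ a.e., which is Proposition~\ref{prop5}), so that $\Delta\phi\ge 0$ makes $D^2\phi$ positive semidefinite, hence $\phi$ is a convex function and its sublevel set $\mathbb{D}$ is convex (Propositions~\ref{prop1}--\ref{prop3}). The delicate point you flag --- that an a.e.\ positive semidefinite Hessian plus continuity does not by itself yield convexity of $\phi$ across the non-smooth medial-axis set --- is a genuine one, and the paper's own proof makes that same leap without addressing it.
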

Before we give the proof of Theorem \ref{convex:maintheorem},
let us give some useful propositions.

\begin{proposition}\label{prop1}
A region $\mathbb{D}$ is a convex set if the SDF $\phi$ of $\mathbb{D}$ is a convex function.
\end{proposition}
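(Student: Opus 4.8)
The plan is to observe that, directly from the definition of the signed distance function, the region $\mathbb{D}$ is (up to its boundary) a sublevel set of $\phi$, and then to invoke the elementary fact that every sublevel set of a convex function is convex.

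First I would record the identification of $\mathbb{D}$ with a sublevel set of $\phi$. By the definition of the SDF, for $x \in \mathbb{D}$ we have $\phi(x) = -\inf_{y \in \partial \mathbb{D}} d(x,y) \le 0$, while for $x \in \Omega \backslash \mathbb{D}$ we have $\phi(x) = \inf_{y \in \partial \mathbb{D}} d(x,y) \ge 0$, with equality exactly on $\partial \mathbb{D}$. Hence $\mathrm{int}\,\mathbb{D} = \{x \in \Omega : \phi(x) < 0\}$ and $\overline{\mathbb{D}} = \{x \in \Omega : \phi(x) \le 0\}$, so $\mathbb{D}$ coincides with one of these two sublevel sets up to part of its boundary; since convexity of a region is unaffected by adding or removing boundary points in this way, it suffices to show $\{x \in \Omega : \phi(x) \le 0\}$ is convex. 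At this point I would also make explicit the (harmless, and implicit in the image-segmentation setting) assumption that the ambient domain $\Omega$ is itself convex, e.g.\ a rectangle, so that convexity relative to $\Omega$ agrees with convexity in $\mathbb{R}^2$.

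Next I would prove the sublevel-set claim. Take $x, y \in \mathbb{D}$, so $\phi(x) \le 0$ and $\phi(y) \le 0$, and fix $\lambda \in (0,1)$. Using the convexity of $\phi$,
\[
\phi\bigl(\lambda x + (1-\lambda) y\bigr) \le \lambda \phi(x) + (1-\lambda) \phi(y) \le 0,
\]
so that $\lambda x + (1-\lambda) y \in \{z \in \Omega : \phi(z) \le 0\} = \overline{\mathbb{D}}$. By the definition of a convex region this shows $\mathbb{D}$ is convex.

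The argument is short, and the only step that genuinely requires care is the first one: being precise about whether the ``region'' $\mathbb{D}$ is taken open or closed, checking that it really is the zero sublevel set of $\phi$ rather than something differing from it in an uncontrolled way, and noting the convexity assumption on $\Omega$. Once this bookkeeping is in place, convexity of $\mathbb{D}$ follows immediately from the defining inequality for convex functions.
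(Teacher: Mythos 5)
Your proposal is correct and follows essentially the same route as the paper's own proof: both arguments apply the defining inequality of a convex function to two points of $\mathbb{D}$ and use the sign of the SDF to conclude that the convex combination stays in $\mathbb{D}$ (i.e.\ that $\mathbb{D}$ is a sublevel set of a convex function). Your version is slightly more careful about the boundary ($\phi \le 0$ versus the paper's strict $\phi < 0$) and about the implicit convexity of $\Omega$, but the substance is identical.
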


\begin{proof}
$\forall x,y \in \mathbb{D}, \forall \lambda\in(0,1)$.
Based on the definition of SDF,
\begin{displaymath}
\phi(x)<0,\phi(y)<0.
\end{displaymath}
Because $\phi$ is a convex function, then
\begin{displaymath}
\phi(\lambda x+(1-\lambda) y)\leq\lambda\phi(x) + (1-\lambda)\phi(y)<0.
\end{displaymath}
Due to the definition of SDF,
\begin{displaymath}
\lambda x+(1-\lambda) y\in \mathbb{D}.
\end{displaymath}
\end{proof}

\begin{proposition}\label{prop2}
For $\phi\in C^2(\Omega)$, $\phi$ is convex if the Hessian matrix of $\phi$ is positive semidefinite.
\end{proposition}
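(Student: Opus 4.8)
The plan is to reduce the multivariate statement to the one–dimensional case by restricting $\phi$ to line segments. First I would fix two arbitrary points $x,y\in\Omega$. Since the image domain $\Omega$ is a rectangle, hence convex, the whole segment $\{x+t(y-x):t\in[0,1]\}$ lies in $\Omega$, so one may define $g:[0,1]\to\mathbb{R}$ by $g(t)=\phi\bigl(x+t(y-x)\bigr)$. Because $\phi\in C^2(\Omega)$, the chain rule gives $g\in C^2([0,1])$ with
\[
g''(t)=(y-x)^{\top}\,\bigl(\nabla^2\phi\bigr)\bigl(x+t(y-x)\bigr)\,(y-x).
\]
By the hypothesis that the Hessian $\nabla^2\phi$ is positive semidefinite at every point, each such quadratic form is nonnegative, so $g''(t)\ge 0$ for all $t\in[0,1]$.

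Next I would invoke the elementary fact that a $C^2$ function of one real variable with nonnegative second derivative is convex. If one wants a self-contained argument, Taylor's formula with integral remainder gives, for any $a,t\in[0,1]$,
\[
g(t)=g(a)+g'(a)(t-a)+\int_a^t (t-u)\,g''(u)\,du\ \ge\ g(a)+g'(a)(t-a),
\]
so the graph of $g$ lies above all of its tangent lines, which is equivalent to convexity of $g$; alternatively, $g''\ge 0$ means $g'$ is nondecreasing, and convexity then follows from the mean value theorem. Either way, $g$ is convex on $[0,1]$.

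Finally I would translate back. For any $\lambda\in(0,1)$, convexity of $g$ applied at $\lambda=\lambda\cdot 1+(1-\lambda)\cdot 0$ yields $g(\lambda)\le \lambda g(1)+(1-\lambda)g(0)$, i.e., since $x+\lambda(y-x)=(1-\lambda)x+\lambda y$,
\[
\phi\bigl((1-\lambda)x+\lambda y\bigr)\le \lambda\,\phi(y)+(1-\lambda)\,\phi(x).
\]
As $x,y\in\Omega$ and $\lambda\in(0,1)$ were arbitrary, $\phi$ is convex on $\Omega$, which is the claim. This proposition combined with Proposition~\ref{prop1} reduces Theorem~\ref{convex:maintheorem} to showing that $\Delta\phi\ge 0$ forces the Hessian of an SDF to be positive semidefinite, which is where the special structure $|\nabla\phi|=1$ of a signed distance function will enter.

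There is no deep obstacle here; this is a classical characterization of convexity. The only point genuinely needing care is that the segment joining $x$ and $y$ must stay inside the domain, which is why convexity of $\Omega$ is used (and is harmless for the usual rectangular image domain) — without it, "convex function on $\Omega$" would not even be well posed. A minor technical caveat, tied to the standing assumption that $\phi\in C^2$ only almost everywhere, is that if $\phi$ fails to be $C^2$ on a null set one must check that $g$ is still (absolutely) $C^2$-like along almost every segment so that the integral-remainder step is valid; this can be arranged by a Fubini argument, but for the clean $C^2(\Omega)$ hypothesis stated in the proposition no such care is required.
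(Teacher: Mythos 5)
Your proof is correct: the restriction-to-segments argument, with $g''(t)=(y-x)^{\top}\nabla^2\phi\,(y-x)\ge 0$ and the one-dimensional Taylor/tangent-line step, is the standard and complete justification of this classical fact. The paper itself states Proposition~\ref{prop2} without any proof, so there is nothing to compare against; your write-up simply supplies the omitted textbook argument (and your closing remark correctly identifies that the real content of Theorem~\ref{convex:maintheorem} lies in Propositions~\ref{prop3} and~\ref{prop5}, not here).
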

Let us denote $x=(x_1,x_2)$, then

\begin{proposition}\label{prop3}
For $\phi\in C^2(\Omega)$,  the Hessian matrix of $\phi$ is positive semidefinite if
\begin{displaymath}
\Delta \phi = \phi_{x_1x_1} + \phi_{x_2x_2} \geq0, \phi_{x_1x_1}\phi_{x_2x_2}-\phi_{x_1x_2}\phi_{x_2x_1} \geq0.
\end{displaymath}
\end{proposition}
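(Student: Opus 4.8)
The plan is to treat this as an elementary fact about $2\times 2$ real symmetric matrices, applied pointwise to the Hessian $H(x)=\nabla^2\phi(x)$. First I would record that, since $\phi\in C^2(\Omega)$, Schwarz's theorem gives $\phi_{x_1x_2}=\phi_{x_2x_1}$ at every point, so $H(x)$ is symmetric and the stated hypothesis reads $\operatorname{tr}H(x)=\phi_{x_1x_1}+\phi_{x_2x_2}\ge 0$ together with $\det H(x)=\phi_{x_1x_1}\phi_{x_2x_2}-\phi_{x_1x_2}^2\ge 0$. The claim to be proved then reduces to: a symmetric $2\times 2$ matrix with nonnegative trace and nonnegative determinant is positive semidefinite, and this is what I would establish.

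For the core step I would argue via eigenvalues. A real symmetric $2\times 2$ matrix $H$ has two real eigenvalues $\lambda_1,\lambda_2$ (with multiplicity), satisfying $\lambda_1+\lambda_2=\operatorname{tr}H$ and $\lambda_1\lambda_2=\det H$. Since $\lambda_1\lambda_2=\det H\ge 0$, the two eigenvalues have the same sign or at least one vanishes; combined with $\lambda_1+\lambda_2=\operatorname{tr}H\ge 0$, neither can be strictly negative, hence $\lambda_1,\lambda_2\ge 0$. A symmetric matrix all of whose eigenvalues are nonnegative is positive semidefinite (orthogonally diagonalize), which is exactly the assertion. As this holds at each $x\in\Omega$, the Hessian of $\phi$ is positive semidefinite throughout $\Omega$, and by Proposition~\ref{prop2} one then obtains convexity of $\phi$.

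Alternatively, and avoiding any appeal to the spectral theorem, I could complete the square directly. Writing $a=\phi_{x_1x_1}$, $b=\phi_{x_1x_2}$, $c=\phi_{x_2x_2}$, I would first note that $a<0$ is impossible: from $ac\ge b^2\ge 0$ it would force $c\le 0$, contradicting $a+c\ge 0$. Then for any $v=(v_1,v_2)$,
\[
v^{\mathsf T}H v = a v_1^2 + 2b v_1 v_2 + c v_2^2 = a\Bigl(v_1+\tfrac{b}{a}v_2\Bigr)^2 + \frac{ac-b^2}{a}\,v_2^2 \ge 0
\]
whenever $a>0$, while if $a=0$ then $-b^2=ac-b^2\ge 0$ forces $b=0$, and $c=a+c\ge 0$, so $v^{\mathsf T}Hv=c v_2^2\ge 0$ as well.

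I do not expect a genuine obstacle; the statement is essentially a linear-algebra lemma. The only point deserving a word of care is the opening reduction: the $C^2$ hypothesis must be invoked so that the Hessian is symmetric, making ``positive semidefinite'' meaningful in the usual sense and turning $\phi_{x_1x_2}\phi_{x_2x_1}$ into $\phi_{x_1x_2}^2$. One should also remember that nonnegativity of the leading principal minors alone does not characterize positive semidefiniteness — it is precisely the trace–determinant packaging (equivalently, the completing-the-square identity above) that closes the argument cleanly in dimension two.
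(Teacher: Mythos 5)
Your proof is correct. The paper actually states Proposition~\ref{prop3} without any proof, treating it as a standard linear-algebra fact, so there is no argument in the paper to compare against; your write-up simply supplies the justification the authors leave implicit. Both of your routes work: the trace--determinant/eigenvalue argument ($\lambda_1\lambda_2=\det H\ge 0$ and $\lambda_1+\lambda_2=\operatorname{tr}H\ge 0$ force $\lambda_1,\lambda_2\ge 0$) and the completing-the-square computation are complete, and your closing caveat is well taken --- nonnegativity of the \emph{leading} principal minors alone does not imply positive semidefiniteness (e.g.\ $\mathrm{diag}(0,-1)$), so the trace condition, rather than just $\phi_{x_1x_1}\ge 0$, is genuinely needed here. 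The appeal to Schwarz's theorem to symmetrize the Hessian under the $C^2$ hypothesis is also the right opening move.
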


\begin{proposition}\label{prop4}[Basic Property of SDF]
For any SDF $\phi$, its gradient satisfies the eikonal equation: $|\nabla \phi|=1,a.e.$.
\end{proposition}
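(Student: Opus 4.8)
The plan is to prove the two inequalities $|\nabla\phi(x)|\le 1$ and $|\nabla\phi(x)|\ge 1$ separately at every point $x\in\Omega\setminus\partial\mathbb{D}$ at which $\phi$ is differentiable. Since $\phi$ is assumed to be $C^2$ (hence differentiable) a.e.\ in $\Omega$, and since $\partial\mathbb{D}$ is Lebesgue-null for the regions considered here, this gives $|\nabla\phi|=1$ a.e.

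\textbf{Upper bound.} First I would record that the unsigned distance $d(x)=\inf_{y\in\partial\mathbb{D}}|x-y|$ is $1$-Lipschitz: for any $x,x'$ and any $y\in\partial\mathbb{D}$ we have $|x-y|\le|x-x'|+|x'-y|$, and taking the infimum over $y$ yields $d(x)\le|x-x'|+d(x')$, hence $|d(x)-d(x')|\le|x-x'|$. A short argument along the segment joining $x$ and $x'$ (which meets $\partial\mathbb{D}$ whenever the two points lie on opposite sides, so that both $-d(x)$ and $d(x')$ can be compared to the distance to the crossing point) upgrades this to the signed function: $|\phi(x)-\phi(x')|\le|x-x'|$ for all $x,x'$. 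Consequently, at any differentiability point $x$ and any unit vector $w$, $|\nabla\phi(x)\cdot w|=\lim_{h\to0}|\phi(x+hw)-\phi(x)|/|h|\le 1$, and maximizing over $w$ gives $|\nabla\phi(x)|\le 1$.

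\textbf{Lower bound.} Fix a differentiability point $x\notin\partial\mathbb{D}$. Since $\partial\mathbb{D}$ is nonempty and closed and $y\mapsto|x-y|$ is continuous and coercive, the infimum defining $d(x)$ is attained at some $y^{*}\in\partial\mathbb{D}$ with $|x-y^{*}|=d(x)>0$. Set $v=(y^{*}-x)/d(x)$ and $x_t=x+tv$ for $t\in[0,d(x)]$. A direct computation gives $|x_t-y^{*}|=d(x)-t$, so $d(x_t)\le d(x)-t$; the Lipschitz bound gives the reverse inequality $d(x_t)\ge d(x)-t$, hence $d(x_t)=d(x)-t$. For small $t$ the point $x_t$ stays on the same side of $\partial\mathbb{D}$ as $x$, so $\phi(x_t)=\phi(x)\mp t$ (minus sign if $x$ is outside $\mathbb{D}$, plus if inside), and the directional derivative of $\phi$ at $x$ along $v$ equals $\mp1$. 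Differentiability then forces $\nabla\phi(x)\cdot v=\mp 1$, and since $|v|=1$ the Cauchy--Schwarz inequality yields $|\nabla\phi(x)|\ge 1$. Combining with the upper bound gives $|\nabla\phi(x)|=1$ at every such $x$, hence a.e.\ in $\Omega$.

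\textbf{Main obstacle.} The differential-geometric content above is light; the delicate part is the measure-theoretic bookkeeping. One must justify that $\phi$ is differentiable a.e.\ (here supplied by the standing $C^2$-a.e.\ hypothesis, or alternatively by Rademacher's theorem applied to the Lipschitz function $\phi$) and that $\partial\mathbb{D}$ is negligible, which is automatic for the piecewise-smooth boundaries arising in segmentation but should be flagged as a hypothesis in full generality. A secondary technical point is the attainment of the nearest boundary point $y^{*}$, which requires $\partial\mathbb{D}$ closed and the infimum to be realized; both hold in the image-domain setting where $\Omega$ is bounded.
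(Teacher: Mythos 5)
Your proof is correct, but note that the paper itself offers no proof of Proposition \ref{prop4} at all: it is stated as a ``Basic Property of SDF'' and used as a known fact (it is the standard eikonal property of distance functions). Your argument is the classical one and fills this gap cleanly: the $1$-Lipschitz bound on the unsigned distance, upgraded to the signed function via the boundary-crossing point on the segment, gives $|\nabla\phi|\le 1$ at differentiability points; the ray toward a nearest boundary point $y^{*}$, along which the distance decreases at exactly unit rate, gives the matching lower bound $|\nabla\phi|\ge 1$. Your closing remarks on the measure-theoretic hypotheses are also apt --- the a.e.\ differentiability is exactly what the paper's standing assumption $\phi\in C^2(\Omega)$ a.e.\ is meant to supply (and is in any case guaranteed by Rademacher's theorem since $\phi$ is $1$-Lipschitz), and the negligibility of $\partial\mathbb{D}$ is implicit in the paper's setting of piecewise-smooth segmentation contours. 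The only step worth flagging is the upgrade of the Lipschitz bound to the signed distance for points on opposite sides: it relies on the segment from $x$ to $x'$ actually meeting $\partial\mathbb{D}$, which holds because a continuous path from $\mathbb{D}$ to its complement must pass through the boundary; you gesture at this but could state it explicitly. None of this affects correctness.
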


\begin{proposition}\label{prop5}
If a SDF $\phi \in C^2$ a.e. in $\Omega$, then one can get  $\phi_{x_1x_1}\phi_{x_2x_2}-\phi_{x_1x_2}\phi_{x_2x_1} =0$ {a.e.} in $\Omega$.
\end{proposition}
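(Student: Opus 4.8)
The plan is to derive the identity directly from the eikonal equation of Proposition~\ref{prop4} by differentiating it once more. First I would fix attention on the open set $U\subseteq\Omega$ on which $\phi$ is genuinely $C^2$; by hypothesis $\Omega\setminus U$ is a null set, so it suffices to prove the claim on $U$. On $U$ the map $x\mapsto|\nabla\phi(x)|^2=\phi_{x_1}^2+\phi_{x_2}^2$ is continuous, and it equals $1$ almost everywhere by Proposition~\ref{prop4}; hence it equals $1$ at \emph{every} point of $U$. This upgrade from ``almost everywhere'' to ``pointwise on $U$'' is the only subtle point in the argument, and it is exactly what the $C^2$-a.e.\ hypothesis is there to buy us.

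Next I would differentiate the pointwise identity $\phi_{x_1}^2+\phi_{x_2}^2=1$ on $U$ with respect to $x_1$ and to $x_2$, which is legitimate since $\phi\in C^2(U)$, to obtain
\begin{equation}\label{eq:prop5diff}
\begin{array}{rcl}
\phi_{x_1}\,\phi_{x_1x_1}+\phi_{x_2}\,\phi_{x_2x_1}&=&0,\\[2pt]
\phi_{x_1}\,\phi_{x_1x_2}+\phi_{x_2}\,\phi_{x_2x_2}&=&0.
\end{array}
\end{equation}
Reading \eqref{eq:prop5diff} as a linear system, it says precisely that the Hessian matrix of $\phi$ annihilates the gradient vector $\nabla\phi$ at each point of $U$, i.e.\ $\nabla\phi$ lies in the kernel of $\mathrm{Hess}(\phi)$.

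Finally, since $|\nabla\phi|=1\neq 0$ on $U$, the gradient is a nonzero kernel vector, so $\mathrm{Hess}(\phi)$ is singular and its determinant vanishes:
\begin{displaymath}
\phi_{x_1x_1}\phi_{x_2x_2}-\phi_{x_1x_2}\phi_{x_2x_1}=\det\mathrm{Hess}(\phi)=0\qquad\text{on }U,
\end{displaymath}
and therefore a.e.\ in $\Omega$ because $U$ has full measure. If one prefers to avoid the determinant, the same conclusion follows by solving \eqref{eq:prop5diff} directly: where $\phi_{x_2}\neq 0$ one gets $\phi_{x_2x_1}=-\phi_{x_1}\phi_{x_1x_1}/\phi_{x_2}$ and $\phi_{x_2x_2}=-\phi_{x_1}\phi_{x_1x_2}/\phi_{x_2}$, and substituting these into $\phi_{x_1x_1}\phi_{x_2x_2}-\phi_{x_1x_2}\phi_{x_2x_1}$ makes it collapse to $0$; the locus $\phi_{x_1}=\phi_{x_2}=0$ is empty by the eikonal equation, and the lines $\phi_{x_1}=0$ or $\phi_{x_2}=0$ are handled symmetrically. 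The main obstacle, as noted, is the measure-theoretic bookkeeping around the null set where $\phi$ fails to be $C^2$; the computational core is just the two-line differentiation in \eqref{eq:prop5diff}.
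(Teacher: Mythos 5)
Your proposal is correct and follows essentially the same route as the paper: both differentiate the eikonal identity $\phi_{x_1}^2+\phi_{x_2}^2=1$ from Proposition~\ref{prop4} with respect to $x_1$ and $x_2$ and deduce that the Hessian determinant vanishes. Your finish (the nonzero gradient lies in the kernel of the Hessian, so the Hessian is singular) is a slightly cleaner way to conclude than the paper's multiplication of the two equations followed by a case split, and your continuity argument upgrading the a.e.\ identity to a pointwise one on the open set where $\phi$ is $C^2$ is a point of care the paper glosses over, but the substance is the same.
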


\begin{proof}
From Proposition \ref{prop4}, we have
\begin{displaymath}
\phi_{x_1}^2 + \phi_{x_2}^2 =1, \emph{a.e.}.
\end{displaymath}
Differential by $x_1$ and $x_2$ separately, we have
\begin{displaymath}
\begin{split}
\phi_{x_1}\phi_{x_1x_1} + \phi_{x_2}\phi_{x_2x_1} = 0, \emph{a.e.}\\
\phi_{x_1}\phi_{x_1x_2} + \phi_{x_2}\phi_{x_2x_2} = 0, \emph{a.e.}.
\end{split}
\end{displaymath}
Transfer the second term to the right side, and multiply this two equation, we have
\begin{displaymath}
\phi_{x_1}\phi_{x_1x_1}\phi_{x_2}\phi_{x_2x_2} = \phi_{x_2}\phi_{x_2x_1}\phi_{x_1}\phi_{x_1x_2}, \emph{a.e.}.
\end{displaymath}

If $\phi_{x_1}\phi_{x_2}\neq 0$, we have $\phi_{x_1x_1}\phi_{x_2x_2}-\phi_{x_1x_2}\phi_{x_2x_1} =0, \emph{a.e.}$.

If $\phi_{x_1}=0$ or $\phi_{x_2} = 0$, then $\phi_{x_1x_1} = \phi_{x_1x_2}=0$ or $\phi_{x_2x_1} = \phi_{x_2x_2} = 0$, $\phi_{x_1x_1}*\phi_{x_2x_2}-\phi_{x_1x_2}*\phi_{x_2x_1} =0, \emph{a.e.}$ still holds.

Combine all above, we have $\phi_{x_1x_1}\phi_{x_2x_2}-\phi_{x_1x_2}\phi_{x_2x_1} =0$ \emph{a.e.}.
\end{proof}

Now we can prove the theorem \ref{convex:maintheorem}.
\begin{proof}
Using the definition of SDF and the proposition \ref{prop5}, we know that
$$\phi_{x_1x_1}\phi_{x_2x_2}-\phi_{x_1x_2}\phi_{x_2x_1} =0, a.e..$$
Combining with the assumption $\Delta \phi\geq 0$, and apply the proposition \ref{prop3}, we know that the Hessian matrix of $\phi$ is positive semidefinite at each point. By Proposition \ref{prop2}, we get $\phi$ is a convex function. Therefore, we get $\mathbb{D}$ is a convex region according to proposition \ref{prop1}.
\end{proof}

In the next, we shall propose some proper algorithms for variational level set methods with these constraints to guarantee convexity shape prior. The new models would be very simple but very efficient to keep convex shape prior.

\section{Apply to the proposed convexity shape prior method on Chan-Vese model}\label{sec:applyonDRLSE}

Inspired by \cite{estellers2012efficient,chan2001active},
we apply our method to Chan-Vese model and an edge based model in this section. Here we denote that
\[ \mathbb{C}_1 = \{\phi|~ |\nabla \phi| = 1\},\]
\[\mathbb{C}_2 = \{\phi|~ \Delta \phi \geq 0\}.\]
Then the Chan-Vese model with convexity shape prior would be
\begin{equation}\label{CVC}
\underset{c_1,c_2,\phi\in \mathbb{C}_1\cap\mathbb{C}_2}{\min}\left\{F(c_{1},c_{2},\phi)\right\}.
\end{equation}
The only difference between the original Chan-Vese model and (\ref{CVC}) is that there is a  constraint
$\phi\in\mathbb{C}_1\cap\mathbb{C}_2$ in (\ref{CVC}), and it can guarantee that $\phi$ must be a convex function. Then the segmented result must be a convex region according to theorem \ref{convex:maintheorem}.

\section{Numerical techniques for the constrained minimization problems}\label{sec:solveconvex}

Here, we build an algorithm for finding convex shape prior $\phi\in\mathbb{C}_1\bigcap\mathbb{C}_2$. For such a constraint,  penalty method, Lagrange multiplier method and augmented Lagrange multiplier methods \cite{Wu2010} \emph{etc.} can be applied. In this work, we would not apply these methods since they are relaxation methods. Namely, the constraint can not be strictly kept during the iterations. In fact, the constraint should be strictly kept at each step in order to fulfill the convexity shape prior in our method. We have tested the
penalty and Lagrange multiplier methods, they failed for our method in real implementations for holding the constraints.
The constraint set $\mathbb{C}_2$ is a convex set, and one possible optimiziation technique to to handle this constraints for minimize (\ref{CVC}) is to use a projection algorithm. However, such a projection does not have a closed-form solution and we have to propose an algorithm to fullfil this convex constraint almost everywhere in $\Omega$.

For $\phi\in\mathbb{C}_1\bigcap\mathbb{C}_2$,
we would like to split this problem into two subproblems which involve the constraints $\mathbb{C}_1$ and $\mathbb{C}_2$ seperately.

\subsection{Subproblem $\phi\in\mathbb{C}_1$}
This constraint is non-convex.
Since $\phi\in\mathbb{C}_1$ is very important for our method because of proposition (\ref{prop5}). Penalty method such as \cite{li2010distance} can not be used since we need to  keep the condition $|\nabla \phi|=1$ almost everywhere and numerically this constraint must be kept with good accuracy at each grid point.
Thus  we propose to use the fast matching method \cite{Sethian2000Level} to reinitialize the level set function.
It can be easily implemented by a Matlab function ``bwdist".
For better expression, we use $\mathcal{P}_1(\phi)$ to denote
this pseudo projection, i.e. $\mathcal{P}_1(\phi)\in\mathbb{C}_1$ is an optimal approximation of $\phi$ in the set $\mathbb{C}_1$.
We summarize this method in Algorithm \ref{convexalm_1}.

\begin{algorithm}
\caption{Algorithm for $\mathcal{P}_1(\phi)$.}
Input: $\phi$; output: $\mathcal{P}_1(\phi)$.
\begin{algorithmic}[1]
\label{convexalm_1}
\STATE  finding a zero contour of $\phi$, denoted by $\Gamma$.
\STATE  Using Matlab function ``bwdist" to get the distance between $x$ and $\Gamma$. Denote this distance as $d(x,\Gamma)$.
\STATE Get signed distance function $\mathcal{P}_1(\phi(x))$
by setting signs to $d(x,\Gamma)$.
\end{algorithmic}
\end{algorithm}

\subsection{Subproblem $\phi\in\mathbb{C}_2$}
It is quite easy to verify that the set $\mathbb{C}_2$ is a convex closed set in $C^{2}(\Omega)$, and the Laplace operator is a linear operator.
For a given function $\phi$, we shall consider the projection onto $\mathbb{C}_2$, namely
\begin{displaymath}\label{projlap}
\mathcal{P}_2(\phi) = \arg\min_{\psi \in \mathbb{C}_2} \frac{1}{2}||\phi - \psi||_{L^2(\Omega)}^2.
\end{displaymath}
By duality, this constrainted minimization problem have
the following dual energy

\begin{displaymath}
\min_{\psi}~\max_{\lambda\leqslant 0} \left\{L(\psi;\lambda)=\frac{1}{2}||\phi- \psi||_{L^2(\Omega)}^2 + \int_{\Omega}\lambda(x) \Delta \psi(x) dx\right\}.
\end{displaymath}
Assuming $(\psi^*,\lambda^*)$ is a saddle point of $L(\psi;\lambda)$, then following the well-known KKT conditions, we have
\begin{eqnarray}
\psi^*(x) - \phi(x) + \Delta \lambda^*(x) = 0, \forall x\in \Omega.\label{kkt1}\\
\Delta \psi^*(x)  \geq 0, \forall x\in \Omega.\label{kkt2}\\
\lambda^*(x) \Delta \psi^*(x) = 0,\forall x\in \Omega. \label{kkt3}\\
\lambda^*(x) \leq 0, \forall x\in \Omega.\label{kkt4}
\end{eqnarray}
Unfortunately, the closed-form solutions of $\lambda^*$ and $\psi^*$ are not available since there is a second order differentiable operator $\Delta$. However, we can use (\ref{kkt1}-\ref{kkt4}) to build an iteration method to find an approximation of $\lambda^*$ and $\psi^*$ using active set method.

Let us denote $\psi^m,\lambda^m$ be the $m$-th iteration of $\psi$ and $\lambda$. We would like to let $\psi^m,\lambda^m$ satisfy KKT conditions (\ref{kkt1}-\ref{kkt4}) in each iteration.

For the $m-1$-th iteration, denote the inactive set as $\mathbb{I}^{m-1} = \{x|~\Delta \psi^{m-1}(x)>0\}$ for a given $\psi^{m-1}$. The active set is $\widetilde{\mathbb{I}^{m-1}}= \{x|~\Delta\psi^{m-1}(x)\leq0\}$.
On the $m$-th iteration, we want to find a $\psi^{m}$ to strictly satisfy all the KKT conditions. First, we calculate the corresponding inactive set $\mathbb{I}^{m-1}$, and we get $\lambda^{m-1}(x) = 0$ for $x\in \mathbb{I}^{m-1}$ according to (\ref{kkt3}). Then we can set $\psi^{m} = \psi^{m-1}$ to satisfy (\ref{kkt1}). Otherwise, if $x\in \widetilde{\mathbb{I}^{m-1}}$, the constraint $\Delta \psi =0$ is active and we can solve the equation $\Delta \psi = 0$ to satisfies (\ref{kkt2}). Here, we can use Gauss-Seidel iteration to get $\psi^{m}$. Namely, $\psi^{m}(x) = \frac{1}{4}\sum_{y\in \mathbb{N}_4(x)}\psi^{m-1}(y)$. Here $\mathbb{N}_4(x)$ means the 4 neighborhood of point $x$.

Now under a fixed active set $\mathbb{I}^{m-1}$, we have an approximate  solution of our KKT condition.  Since the active set is only available if we have the real minimal, here we have to use an iteration method to find a solution that close to the real minimal. In the whole iteration, we set iteration number of the Gauss-Seidel method just to be one. If inactive set $\mathbb{I}^{m}$ and $\psi^{m}$ both converge, we get the final result $\mathcal{P}_2(\phi)= \psi^*$, where
$\underset{m\rightarrow +\infty}{\lim}\psi^m=\psi^*$.

We summary the Algorithm for calculating $\mathcal{P}_2(\phi)$ in Algorithm \ref{convexalm_2}.

\begin{algorithm}
\caption{Algorithm for calculating $\mathcal{P}_2(\phi)$.}
Input: $\phi$, output: $\mathcal{P}_2(\phi)$.
\begin{algorithmic}
\label{convexalm_2}
\STATE 1. Set $\psi^0 =\phi $. Finding the inactive set $\mathbb{I}^{0} = \{x|~\Delta\psi^{0}>0\}$. Let m=1.
\STATE 2. Updating $\psi^{m}$ by
$$\psi^{m}(x) =
\left\{
\begin{array}{ll}
\psi^{m-1}(x),&x\in \mathbb{I}^{m-1},\\
\frac{1}{4} \sum_{y\in \mathbb{N}_4(x)} \psi^{m-1}(y),& x\in \widetilde{\mathbb{I}^{m-1}}.
\end{array}
\right.$$
\STATE 3. Inactive set updating. Calculating $\mathbb{I}^{m}$ by the updated $\psi^m$.
\STATE 4. Checking if $\mathbb{I}^{m}=\mathbb{I}^{m-1}$.
 If this equation holds, then go to the next step and end the algorithm. Else, let $m = m+1$, and go back to step 2.
\STATE 5. Set $\mathcal{P}_2(\phi)= \psi^m$.
\end{algorithmic}
\end{algorithm}

\subsection{Summary of finding $\phi\in\mathbb{C}_1\cap\mathbb{C}_2$}
Here, we summarize the algorithms for finding $\phi\in\mathbb{C}_1\cap\mathbb{C}_2$, denote as $\mathcal{P}_3(\phi)\in \mathbb{C}_1\cap\mathbb{C}_2$.
\begin{algorithm}
\caption{Algorithm for calculating $\mathcal{P}_3(\phi)\in \mathbb{C}_1\cap\mathbb{C}_2$.}
Input: $\phi$, output: $\mathcal{P}_3(\phi)$.
\begin{algorithmic}
\label{convexalm_3}
\STATE 1. Set $\phi^0 =\phi$, $n=1$.
\STATE 2. Finding ${\phi}^{n}=(\mathcal{P}_2\circ\mathcal{P}_1)(\phi^{n-1})$ by Algorithm \ref{convexalm_1} and \ref{convexalm_2}.
\STATE 3. Checking convergence by error criterion such as
$\frac{||\phi^n-\phi^{n-1}||}{||\phi^n||}>\epsilon$. If it holds, let $n=n+1$ and go to the step 2. Else, go to the next step.
\STATE 4. Set $\mathcal{P}_3(\phi)= \phi^n$.
\end{algorithmic}
\end{algorithm}


\section{Numerical Algorithms for Chan-Vese model and edge based model with convex shape prior}
The following alternating minimization algorithm which has often be used in the literature for the Chan-Vese model can be used to solve the Chan-Vese convex shape prior minimization problem (\ref{CVC}):
\begin{equation*}
\left\{
\begin{array}{rcl}
\phi^{l+1}&=&\underset{\phi\in\mathbb{C}_1\bigcap\mathbb{C}_2}{\arg\min}~~F(c_1^{l},c_2^{l},\phi),\\
(c_1^{l+1},c_2^{l+1})&=&\underset{c_1,c_2}{\arg\min}~~F(c_1,c_2,\phi^{l+1}).
\end{array}
\right.
\end{equation*}

Combining with algorithm \ref{convexalm_3}, we get the following  iterative scheme for solving (\ref{CVC}):
\begin{displaymath}
\left\{
\begin{array}{rcl}
\phi^{l+\frac{1}{2}} &=& \phi^{l} - \Delta t \frac{\partial F}{\partial \phi}\Intvalx{\phi=\phi^{l},}~\\
\phi^{l+1} &=& \mathcal{P}_3(\phi^{l+\frac{1}{2}}),\\
c_{1}^{l+1}&=& \displaystyle\biggl.\frac{\int_{\Omega}I(x)H(\phi^{l+1}(x))dx}{\int_{\Omega}H(\phi^{l+1}(x))dx},\\
c_{2}^{l+1}&=&\displaystyle\biggl.\frac{\int_{\Omega}I(x)(1-H(\phi^{l+1}(x)))dx}{\int_{\Omega}(1-H(\phi^{l+1}(x)))dx},\\
l&=&0,1,2,3\dots.
\end{array}
\right.
\end{displaymath}
Here the operator $\mathcal{P}_3$ is
given by Algorithm \ref{convexalm_3}.

In our experiments, we use $\lambda_1= \lambda_2=1$ for (\ref{eq:DRLSE}). However,
for some natural images, the foreground and background are not homogeneous and thus we will only use the edge force to segment these images. For such cases, we will take $\lambda_1= \lambda_2=0$ in (\ref{eq:DRLSE}) and refer to this as the edge based model. Correspondingly, there is no need to compute and update the constants $c_i$ when edge based model is used.

\section{experiment results}\label{sec:exp}
Now, we demonstrate the segmentation results in this section. The computing platform for our numerical experiments is a laptop equipped with Intel Core i7 CPU @ 2.90GHz processor and 8G memory. The codes are written in Matlab R2013a without any special optimizing.

The Gaussian kernel used in our experiments is with size $5 \times  5$ and the standard deviation $0.5$.
For the color images, we extract the biggest contrast channel of the RGB color space, and treat them as gray images for segmentation.

\subsection{Testing of Algorithm \ref{convexalm_3}}
Firstly we test our Algorithm \ref{convexalm_3} on a synthetic non-convex region. We will show how the algorithm changes a non-convex region into a convex region. The size of test image is $200\times 200$.

\begin{figure}[!ht]
\begin{center}
\includegraphics{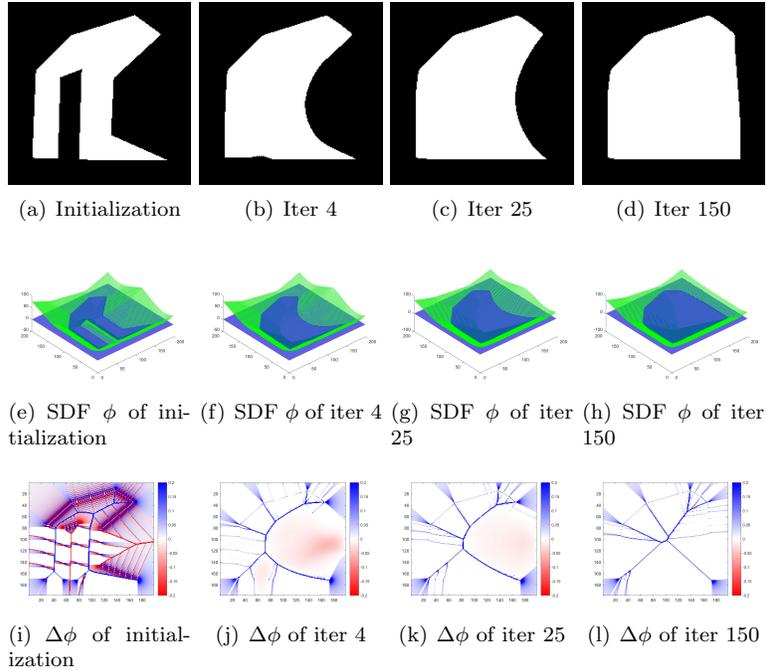}
\end{center}
\caption{Apply Algorithm \ref{convexalm_3} on a non-convex region, results from different iterations.}
\label{testonnonconvexregion}
\end{figure}

The test results are shown in Figure \ref{testonnonconvexregion}, where the images on the first row is the non-convex region and the results of iteration 4, 25, 150, respectively. Here this iteration number refers to the number $n$ in Algorithm \ref{convexalm_3}, the max iteration of m in Algorithm \ref{convexalm_2} is set to 30. The second row is the corresponding level set function in green, with the zero level set in blue.

To show that our method successfully keeps the constraint, we show the values of the Laplace of the level set function  in the third row. To be more clearly, we set the range of these values into [-0.2,0.2], as all we concern about is the relation with 0. For display, two value control dots are added in the upper left and down right corner. We use color blue to denote the positive value and red to denote the negative value. One can see from the images that the red zone becomes less and less and disappear at convergence. This means the Laplacian of the level set function is non-negative.

As you can see that the non-convex region is growing into a convex region near its convex hull. Near the part of the contour where the non-convex gap is wide (the right side), it needs more iterations to become convex. For the part of the contour where the gap is thin (the middle one), it turns to be convex in a few iteraions. This experiment shows that our method successfully turns a non-convex region into a convex region through its SDF using variational methods.

\subsection{Comparisons between Chan-Vese model, proposed convex prior model (region based), edge based model, proposed convex prior model (edge based) and \cite{gorelick2014convexity} on synthetic images}
In this section we apply these segmentation methods to the same images and compare the segmentation results. The images are taken from MPEG-7 CE-Shape-1 Dataset \url{https://cis.temple.edu/~latecki/}. The sizes of image are $256\times 256$. The parameters we used in this experiment are as follows: for the Chan-Vese model, $\mu=10$, as well as the proposed convex prior model (region based). 

Comparison results are listed in Figure \ref{figu1}, where the first column contains the original images, the second column to fifth column is the segmentation results of method Chan-Vese, proposed convex prior model (region based), edge based, proposed convex prior model (edge based), respectively. For better comparison, the manually label seeds of \cite{gorelick2014convexity} is given in the sixth column. Here the blue color represents object, and the red represents background. The seventh column is the results of \cite{gorelick2014convexity}. As you can see that our algorithm successfully segments the region as a convex region in the third and fifth columns. Compared to the semi-supervised method \cite{gorelick2014convexity} with discrete method, the results produced by the proposed are more smooth and tend to be more convex.
In addition, our algorithm does not need manually labeled seeds.

\begin{figure}[!h]
\begin{center}
\includegraphics{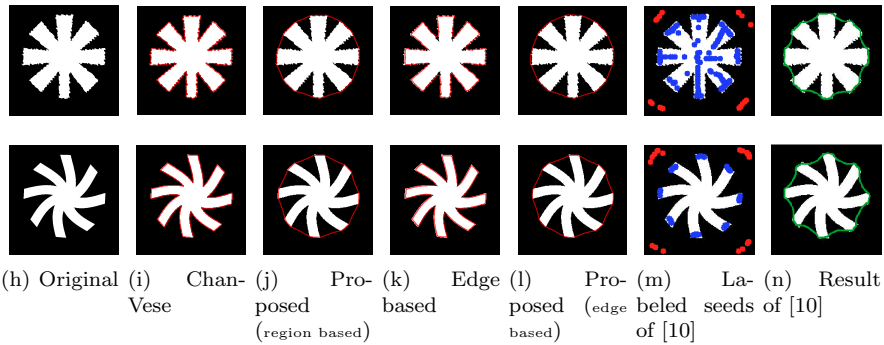}
\end{center}
\caption{The performance of convexity shape prior on different kind of shapes.}
\label{figu1}
\end{figure}

\subsection{Comparisons between models with and without convexity shape prior on real images}
Here, we compare the models with convexity shape prior and without convexity shape prior, of both Chan-Vese model and edge based model, on some nature images. The image is a box with some leafs on it, the size of the image is $128\times 128$, parameters appeared in the model are set as follows: for Chan-Vese model, $\mu=10$.
Note that our convexity shape prior is quite successful.

In Figure \ref{convex:figu5}, the first row left is the original image, the right is the image with initial curve with yellow color for all the experiments. The second row is the segmentation result of Chan-Vese model, proposed convex prior model (region based), edge based model, proposed convex prior model (edge based), respectively. The third and fourth row has the same setting as the third row in Figure \ref{testonnonconvexregion}.

\begin{figure*}[!ht]
\begin{center}
\includegraphics{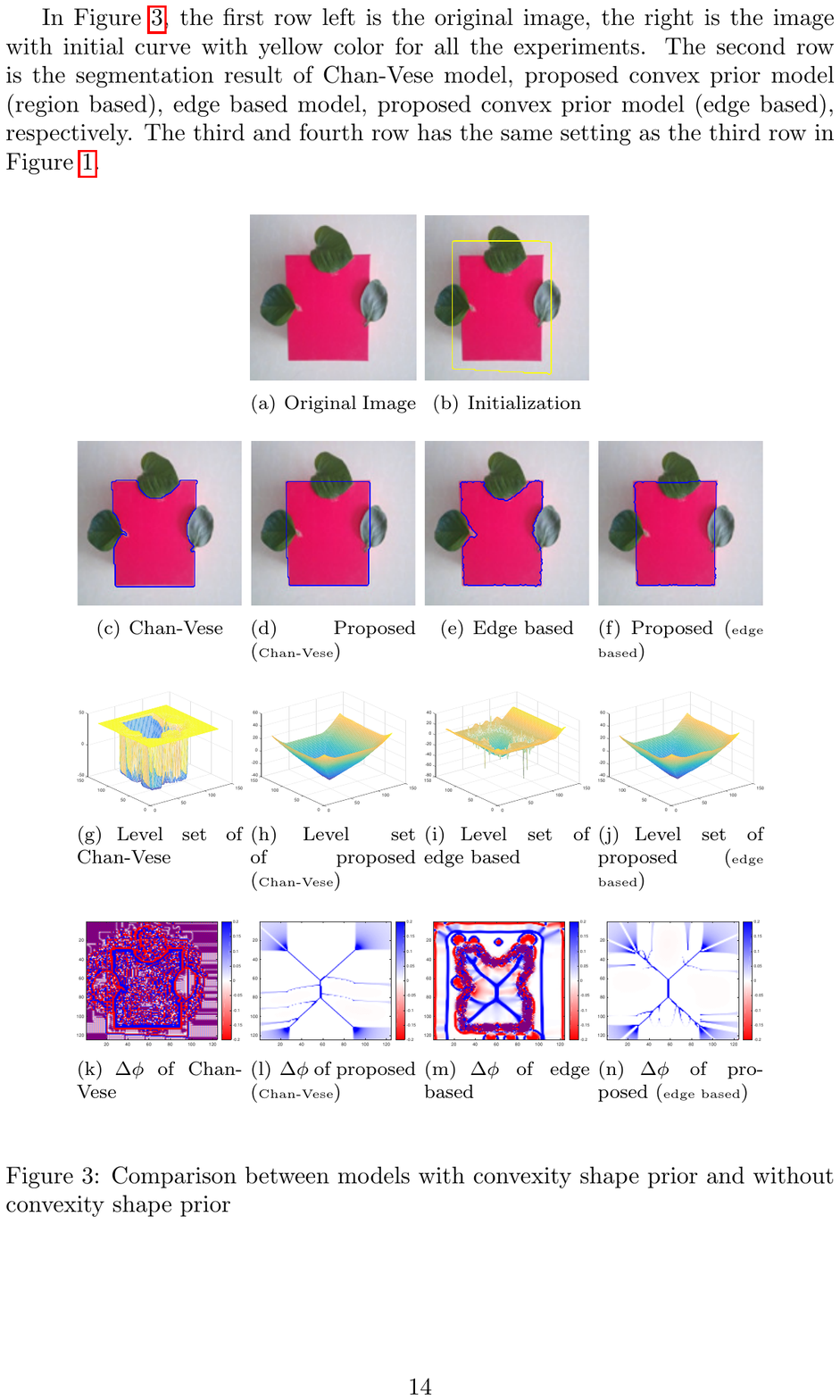}
\end{center}
\caption{Comparison between models with convexity shape prior and without convexity shape prior}
\label{convex:figu5}
\end{figure*}

\subsection{Initialization effects}
\begin{figure*}[!ht]
\begin{center}
\includegraphics{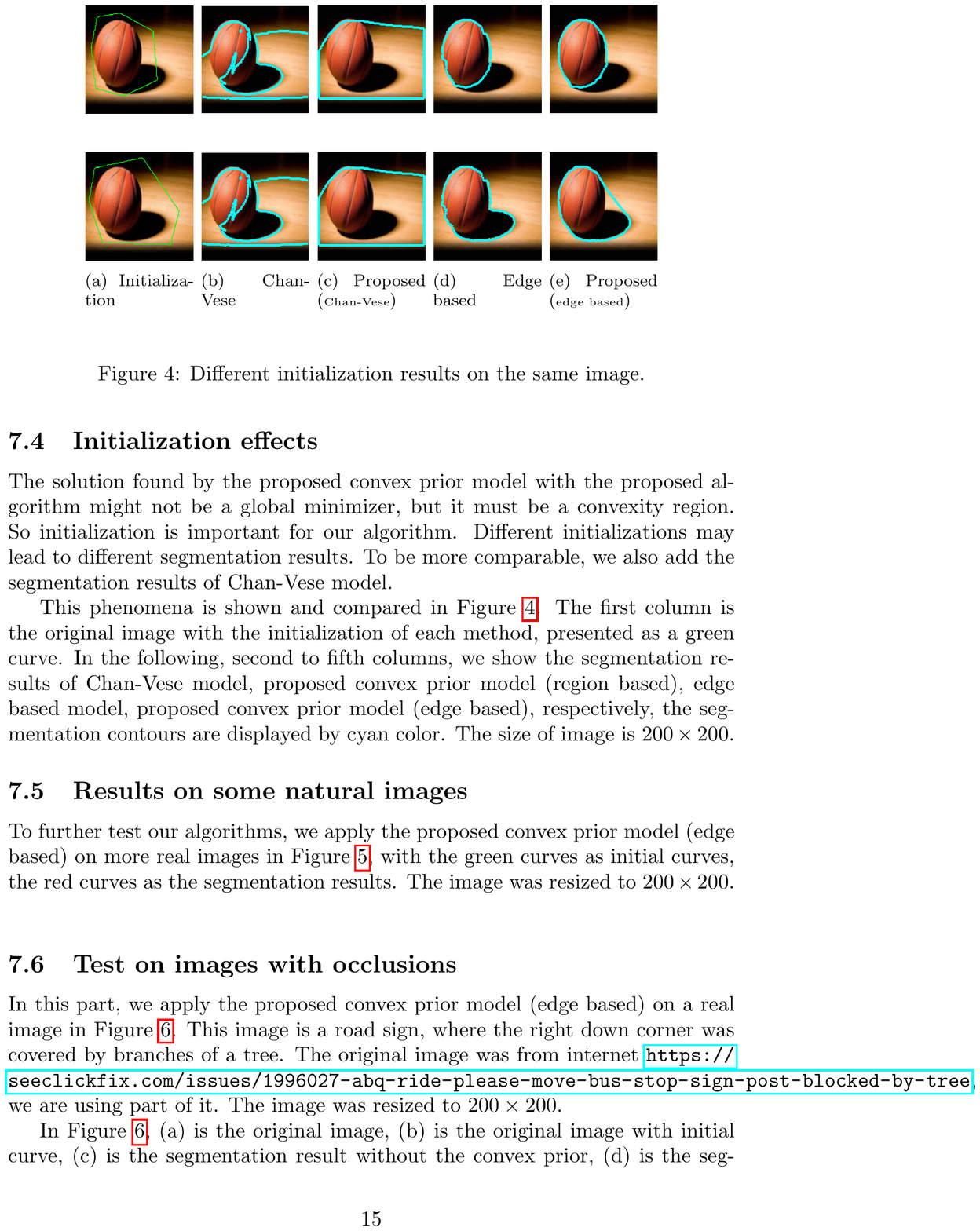}
\end{center}
\caption{Different initialization results on the same image.}
\label{compdiffini}
\end{figure*}
The solution found by the proposed convex prior model with the proposed algorithm might not be a global minimizer, but it must be a convexity region. So initialization is important for our algorithm. Different initializations may lead to different segmentation results. To be more comparable, we also add the segmentation results of Chan-Vese model.

This phenomena is shown and compared in Figure \ref{compdiffini}. The first column is the original image with the initialization of each method, presented as a green curve. In the following, second to fifth columns, we show the segmentation results of Chan-Vese model, proposed convex prior model (region based), edge based model, proposed convex prior model (edge based), respectively, the segmentation contours are displayed by cyan color. The size of image is $200\times 200$.

\subsection{Results on some natural images}

To further test our algorithms, we apply the proposed convex prior model (edge based) on more real images in Figure \ref{convex:figu4}, with the green curves as initial curves, the red curves as the segmentation results.
The image was resized to $200\times 200$.
\begin{figure*}[!ht]
\begin{center}
\includegraphics{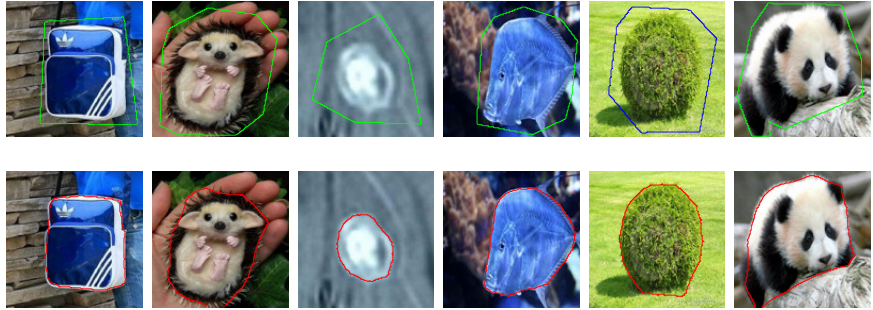}
\end{center}
\caption{Some results on real images.}
\label{convex:figu4}
\end{figure*}

\subsection{Test on images with occlusions}
\begin{figure*}[!ht]
\begin{center}
\includegraphics{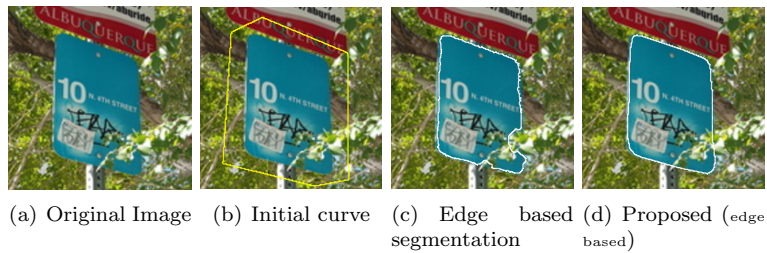}
\end{center}
\caption{Find the complete road sign.}
\label{convex:figu7}
\end{figure*}
In this part, we apply the proposed convex prior model (edge based) on a real image in Figure \ref{convex:figu7}. This image is a road sign, where the right down corner was covered by branches of a tree. The original image was from internet \url{https://seeclickfix.com/issues/1996027-abq-ride-please-move-bus-stop-sign-post-blocked-by-tree}, we are using part of it.
The image was resized to $200\times 200$.

In Figure \ref{convex:figu7}, (a) is the original image, (b) is the original image with initial curve, (c) is the segmentation result without the convex prior, (d) is the segmentation result of proposed model. As you can see, our model successfully segment the whole road sign.

\section{Conclusion and Discussion}\label{convex:conclusion}
In this paper, we proposed a convexity shape prior method for level set based segmentation method such as Chan-Vese model and edge based model. Through analyzing the connection of convex region and the corresponding signed distance function, we give out our convexity shape prior. By adding such a convex shape prior constraint into  these models, we get image segmentation models with convexity shape prior. Compared with the non-convex segmentation, we can see that our method successfully segmented the desired region as a convex region.
Although the algorithm works well and is very efficient, we do not yet know any convergence results for the proposed algorithm. In addition, the proposed method can not be directly extended to some region characteristic functions based segmentation models such as K-means, C-fuzzy, Potts model and GMM based EM segmentations. This is because the convex
property in theorem \ref{convex:maintheorem} would be lost in such methods without signed distance functions.
In the future, we may work on these aspects and extend our method to other applications such as medical images segmentations.
\section*{Acknowledgment}
\addcontentsline{toc}{section}{Acknowledgment}
Shi Yan was partly supported by the China Scholarship Council and University of Bergen.
Jun Liu and Haiyang Huang were partly supported by The National Key Research and Development Program of China (2017YFA0604903).
The authors would like to thank for the DRLSE codes provided by Chunming Li from website:
\url{http://www.imagecomputing.org/~cmli/DRLSE/}, and the codes provided by Lena Gorelick on her page \url{http://www.csd.uwo.ca/~ygorelic/publications.html}.

\bibliographystyle{bnubib}
\bibliography{bibtex}

\begin{thebibliography}{10}

\bibitem{leventon2000statistical}
M.E. Leventon, W.E.L. Grimson, O.~Faugeras.
\newblock Statistical shape influence in geodesic active contours[C].
\newblock Proceedings of Computer Vision and Pattern Recognition, 2000.
  Proceedings. IEEE Conference on, volume~1. IEEE, 2000.
\newblock  316--323.

\bibitem{rousson2002shape}
M.~Rousson, N.~Paragios.
\newblock Shape priors for level set representations[C].
\newblock Proceedings of Computer Vision, European Conference on. Springer,
  2002.
\newblock  78--92.

\bibitem{chan2005level}
T.F. Chan, W.~Zhu.
\newblock Level set based shape prior segmentation[C].
\newblock Proceedings of Computer Vision and Pattern Recognition, 2005. CVPR
  2005. IEEE Computer Society Conference on, volume~2. IEEE, 2005.
\newblock  1164--1170.

\bibitem{thiruvenkadam2007segmentation}
S.R. Thiruvenkadam, T.F. Chan, B.W. Hong.
\newblock Segmentation under occlusions using selective shape prior[C].
\newblock Proceedings of International Conference on Scale Space and
  Variational Methods in Computer Vision. Springer, 2007.
\newblock  191--202.

\bibitem{vu2008shape}
N.~Vu, B.S. Manjunath.
\newblock Shape prior segmentation of multiple objects with graph cuts[C].
\newblock Proceedings of Computer Vision and Pattern Recognition, 2008. CVPR
  2008. IEEE Conference on. IEEE, 2008.
\newblock  1--8.

\bibitem{boykov2001fast}
Y.~Boykov, O.~Veksler, R.~Zabih.
\newblock Fast approximate energy minimization via graph cuts[J].
\newblock IEEE Transactions on Pattern Analysis and Machine Intelligence, 2001,
  23(11):1222--1239.

\bibitem{kolmogorov2004energy}
V.~Kolmogorov, R.~Zabin.
\newblock What energy functions can be minimized via graph cuts?[J].
\newblock IEEE Transactions on Pattern Analysis and Machine Intelligence, 2004,
  26(2):147--159.

\bibitem{guo2015automatic}
W.~Guo, J.~Qin, S.~Tari.
\newblock Automatic prior shape selection for image segmentation[C].
\newblock Proceedings of Research in Shape Modeling. Springer, 2015: 1--8.

\bibitem{liu1999role}
Z.~Liu, D.W. Jacobs, R.Basri.
\newblock The role of convexity in perceptual completion: Beyond good
  continuation[J].
\newblock Vision Research, 1999, 39(25):4244--4257.

\bibitem{gorelick2014convexity}
L.~Gorelick, O.~Veksler, Y.~Boykov, et~al.
\newblock Convexity Shape Prior for Binary Segmentation[J].
\newblock IEEE Transactions on Pattern Analysis and Machine Intelligence, 2017,
  39(2):258--271.

\bibitem{royer2016convexity}
L.A. Royer, D.L. Richmond, C.~Rother, et~al.
\newblock Convexity Shape Constraints for Image Segmentation[C].
\newblock Proceedings of Proceedings of the IEEE Conference on Computer Vision
  and Pattern Recognition, 2016.
\newblock  402--410.

\bibitem{strekalovskiy2011generalized}
E.~Strekalovskiy, D.~Cremers.
\newblock Generalized ordering constraints for multilabel optimization[C].
\newblock Proceedings of Computer Vision (ICCV), 2011 IEEE International
  Conference on. IEEE, 2011.
\newblock  2619--2626.

\bibitem{bae2017augmented}
E.~Bae, X.C. Tai, W.~Zhu.
\newblock Augmented lagrangian method for an euler’s elastica based
  segmentation model that promotes convex contours[J].
\newblock Inverse Problems and Imaging, 2017, 11(1):1--23.

\bibitem{osher1988fronts}
S.~Osher, J.A. Sethian.
\newblock Fronts propagating with curvature-dependent speed: algorithms based
  on Hamilton-Jacobi formulations[J].
\newblock Journal of Computational Physics, 1988, 79(1):12--49.

\bibitem{kass1988snakes}
M.~Kass, A.~Witkin, D.~Terzopoulos.
\newblock Snakes: Active contour models[J].
\newblock International Journal of Computer Vision, 1988, 1(4):321--331.

\bibitem{Caselles1993}
V.~Caselles, F.~Catt{\'e}, T.~Coll, et~al.
\newblock A geometric model for active contours in image processing[J].
\newblock Numerische Mathematik, 1993, 66(1):1--31.

\bibitem{caselles1997geodesic}
V.~Caselles, R.~Kimmel, G.~Sapiro.
\newblock Geodesic active contours[J].
\newblock International Journal of Computer Vision, 1997, 22(1):61--79.

\bibitem{Jr1997A}
A.J. Yezzi, S.~Kichenassamy, A.~Kumar, et~al.
\newblock A geometric snake model for segmentation of medical imagery.[J].
\newblock IEEE Transactions on Medical Imaging, 1997, 16(2):199--209.

\bibitem{chan2001active}
T.F. Chan, L.A. Vese.
\newblock Active contours without edges[J].
\newblock IEEE Transactions on Image Processing, 2001, 10(2):266--277.

\bibitem{Lavdie2012A}
L.~Rada, K.~Chen.
\newblock A New Variational Model with Dual Level Set Functions for Selective
  Segmentation[J].
\newblock Communications in Computational Physics, 2012, 12(1):261--283.

\bibitem{Rada2013Improved}
L.~Rada, K.~Chen.
\newblock Improved selective segmentation model using one level-set[J].
\newblock Journal of Algorithms and Computational Technology, 2013,
  7(4):509--540.

\bibitem{Ali2017Multiphase}
H.~Ali, N.~Badshah, K.~Chen, et~al.
\newblock Multiphase segmentation based on new signed pressure force functions
  and one 1 level set function[J].
\newblock Turkish Journal of Electrical Engineering and Computer Sciences,
  2017, 25(4):2943--2955.

\bibitem{Sethian2000Level}
J.A. Sethian.
\newblock Level set methods and fast marching methods: evolving interfaces in
  computational geometry, fluid mechanics, computer vision, and materials
  science[M].
\newblock Cambridge University Press, 1999.

\bibitem{Dijkstra1959A}
E.~W. Dijkstra.
\newblock A note on two problems in connexion with graphs[J].
\newblock Numerische Mathematik, 1959, 1(1):269--271.

\bibitem{Zhao2005A}
H.K. Zhao.
\newblock A fast sweeping method for Eikonal equations[J].
\newblock Mathematics of Computation, 2005, 74(250):603--627.

\bibitem{li2010distance}
C.~Li, C.~Xu, C.~Gui, et~al.
\newblock Distance regularized level set evolution and its application to image
  segmentation[J].
\newblock IEEE Transactions on Image Processing, 2010, 19(12):3243--3254.

\bibitem{6944678}
Y.~Liu, C.~Li, S.~Guo, et~al.
\newblock A novel level set method for segmentation of left and right
  ventricles from cardiac MR images[C].
\newblock Proceedings of 2014 36th Annual International Conference of the IEEE
  Engineering in Medicine and Biology Society, 2014.
\newblock  4719--4722.

\bibitem{10.1007/978-3-642-40811-3_60}
C.~Feng, C.~Li, D.~Zhao, et~al.
\newblock Segmentation of the Left Ventricle Using Distance Regularized
  Two-Layer Level Set Approach[C].
\newblock In: Kensaku Mori, Ichiro Sakuma, Yoshinobu Sato, et~al., (eds.).
  Proceedings of Medical Image Computing and Computer-Assisted Intervention --
  MICCAI 2013, Berlin, Heidelberg: Springer Berlin Heidelberg, 2013.
\newblock  477--484.

\bibitem{doi:10.1118/1.4947126}
C.~Feng, S.~Zhang, D.~Zhao, et~al.
\newblock Simultaneous extraction of endocardial and epicardial contours of the
  left ventricle by distance regularized level sets[J].
\newblock Medical Physics, 43(6Part1):2741--2755.

\bibitem{estellers2012efficient}
V.~Estellers, D.~Zosso, R.~Lai, et~al.
\newblock Efficient algorithm for level set method preserving distance
  function[J].
\newblock IEEE Transactions on Image Processing, 2012, 21(12):4722--4734.

\bibitem{Wu2010}
C.~Wu, X.C. Tai.
\newblock Augmented Lagrangian method, dual methods, and split Bregman
  iteration for ROF, vectorial TV, and high order models[J].
\newblock SIAM Journal on Imaging Sciences, 2010, 3(3):300--339.

\end{thebibliography}

\end{document}